\documentclass[runningheads]{llncs}

\usepackage[T1]{fontenc}
\usepackage[utf8]{inputenc}
\usepackage{graphicx}
\usepackage{amsmath}
\usepackage{amssymb}
\usepackage{booktabs}
\usepackage{placeins}
\usepackage{url}
\usepackage[hidelinks]{hyperref}
\usepackage{xspace}

\newcommand{\method}{\textsc{Caliber}\xspace}

\begin{document}

\title{\textsc{Caliber}: Cross-Architecture Extraction-Cost Control for Score-Returning APIs}

\titlerunning{\textsc{Caliber}: Cross-Architecture Extraction-Cost Control}

\author{Chi Wang\inst{1} \and Hanwen Wang\inst{2} \and Yu Xia\inst{3} \and
Zihan Wang\inst{1} \and Guangdong Bai\inst{4}}

\authorrunning{C. Wang et al.}

\institute{University of Queensland, Brisbane, Australia\\
\email{chi.wang2@student.uq.edu.au, zihan.wang@uq.edu.au} \and
Adelaide University, Adelaide, Australia\\
\email{hanwen.wang@adelaide.edu.au} \and
Beijing Technology and Business University, Beijing, China\\
\email{2307010124@st.btbu.edu.cn} \and
City University of Hong Kong, Hong Kong, China\\
\email{g.bai@cityu.edu.hk}}

\maketitle

\begin{abstract}
We present \method, an output-perturbation defense against model extraction that formulates noise selection as a calibration problem: how much the defense degrades the supervision signal used to train a surrogate, and the provable per-input query cost of recovering the clean logits. To defend against an attacker that uses returned scores for knowledge distillation, \method adds independent and identically distributed Gaussian noise to the internal logits. We establish two properties of the resulting perturbed predictions. \emph{Monotone agreement degradation:} When the clean logits have a unique maximizer, agreement with the clean prediction decreases strictly with the noise scale, so every target in $(1/K,1)$ corresponds to a unique positive scale; task accuracy is bounded by computable lower and upper envelopes. \emph{Per-input recovery cost:} We derive a closed-form minimax lower bound on the repeated queries needed to recover the clean logits for a fixed input. \method normalizes noise variance by the squared median top-two logit margin and fits the resulting noise-utility relationship with a logistic curve, either per model or shared within a task. Across more than thirty model-dataset combinations, per-model calibration achieves mean absolute relative errors of 0.6--1.4\%. End-to-end experiments show that surrogate performance generally tracks the configured degradation, while fixed-input averaging follows the expected variance reduction.
\keywords{Model stealing attack \and Intellectual property protection \and Deep neural networks.}

\end{abstract}

\section{Introduction}

Deep learning models are expensive to train, which makes them valuable intellectual property worth protecting. Training Llama 3 405B consumed 30.84 million GPU hours on up to 16,000 H100 GPUs~\cite{grattafiori2024llama}. For score-returning APIs, confidence vectors improve downstream usefulness but also provide a supervision signal that an attacker can use for distillation. Querying at scale, an attacker collects the returned scores and trains a surrogate by knowledge distillation~\cite{hinton2015distilling} that approximates the model's behavior at a fraction of the original cost~\cite{tramer2016stealing,orekondy2019knockoff}. Such extraction has already been demonstrated against production language-model APIs~\cite{krishna2020thieves,carlini2024stealing}, and Anthropic reported coordinated distillation campaigns involving more than 16 million API
interactions from approximately 24,000 accounts whose operators sought to reproduce Claude's capabilities~\cite{anthropic2026distillation}.

Blocking suspicious accounts is difficult at the scale of a commercial API service. Existing detectors identify query patterns that deviate from benign traffic~\cite{kesarwani2018extraction,juuti2019prada,zhang2021seat}, yet heavy legitimate use and distributed extraction can produce overlapping query volumes and input distributions. Providers therefore maintain conservative false-positive rates, allowing some malicious activity to go undetected. Blocking also offers no intermediate response between unrestricted access and complete denial.

An alternative is to keep a flagged account active while limiting the information that its user can extract. The provider perturbs the confidence scores it returns, corrupting the soft labels that distillation needs at a calibrated accuracy cost while each returned response stays plausible~\cite{lee2019defending,orekondy2020prediction,kariyappa2021protecting}. This caps the accuracy of the resulting surrogate, and an attacker who wants the clean scores back must average repeated queries on the same input, paying a multiple of the query budget. Existing output-perturbation defenses require model-specific tuning. Because the map from noise magnitude to utility loss depends on the architecture, a raw noise level that barely dents one model may severely degrade another, so unnormalized settings do not transfer across a heterogeneous portfolio. A practical defense should instead map a specified utility target to the model-specific noise that reaches it.

\begin{figure}[!ht]
  \centering
  \includegraphics[width=1\textwidth]{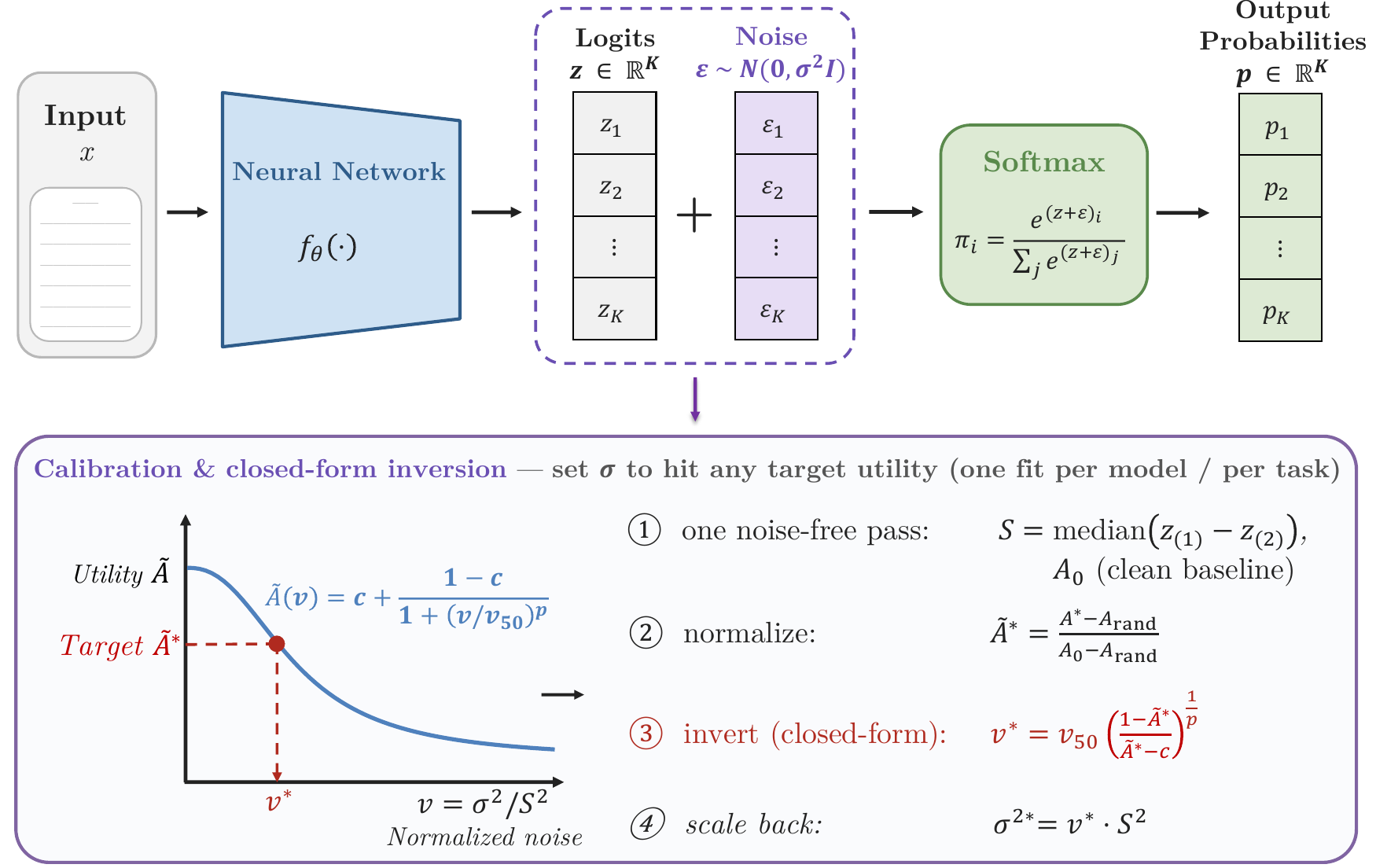}
  \caption{Overview of \method{}. Top: i.i.d.\ Gaussian noise of variance $\sigma^2$ is injected into the logits before the softmax. Bottom: after margin normalization, the noise-utility relationship is a single logistic curve, which \method{} inverts in closed form to map a target utility $A^*$ to the required $\sigma^2$, fit per model or shared per task.}
  \label{fig:unidial-overview}
\end{figure}

\method{} is that mechanism. It answers \emph{how a provider sets the degradation of returned scores on any served model}, capping the supervision an extraction campaign can distill and forcing an attacker who tries to cancel the zero-mean noise on an input to pay a multiplicative query cost that we bound below. As Figure~\ref{fig:unidial-overview} shows, \method injects independent and identically distributed (i.i.d.)\ Gaussian noise of variance $\sigma^2$ into the model logits. The noise required for a given utility reduction varies substantially across architectures, and much of this variation is captured by a model-level scale $S$, defined as the median top-two logit margin over a validation set. Once the noise is normalized by this margin, every architecture we evaluate, spanning CNNs, Vision Transformers, large language models (LLMs), and vision-language models (VLMs), follows the same monotone logistic law. Our contributions are as follows:
\begin{enumerate}
\item We recast extraction defense as a controllable problem, proving strictly monotone degradation with a computable accuracy corridor and a minimax lower bound on the queries needed to cancel the injected noise on any single input. The cost is deliberately per-input, and we mark where the empirical evidence takes over.

\item We introduce a margin-normalized calibration law that sets any degradation target on a served model or on an unseen architecture, without per-model tuning.

\item We validate \method{} across more than thirty model-dataset combinations, hitting utility targets to within ${\sim}1\%$. Furthermore, our end-to-end extraction study demonstrates that the stolen surrogate inherits the calibrated degradation.
\end{enumerate}

\section{Problem Formulation}
\subsection{Threat Model}
\label{sec:threat-model}

\begin{figure}[t]
  \centering
  \includegraphics[width=0.65\textwidth]{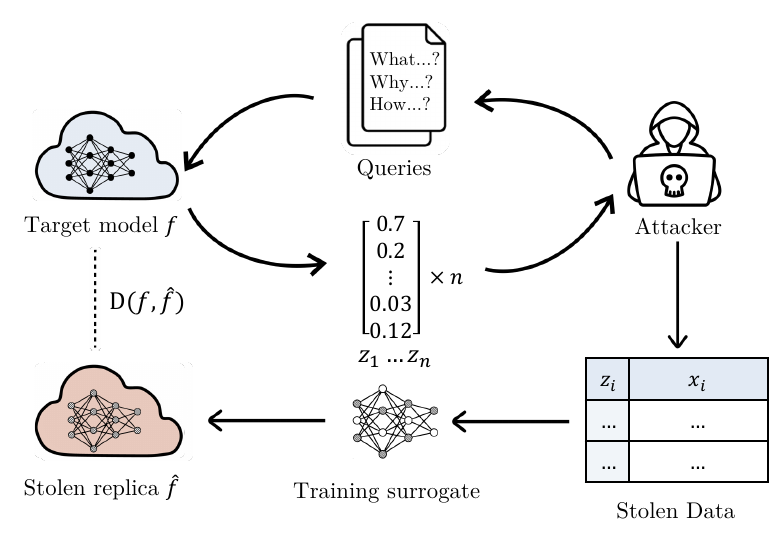}
  \caption{Model extraction threat model}
  \label{fig:threat-model}
\end{figure}

\paragraph{Assets and attacker goal.} A provider serves a target model $f$ through a black-box API. Without access to the model's parameters, training data, or architecture, the attacker seeks a surrogate $\hat{f}$ that minimizes the expected output discrepancy over a task-relevant input distribution $P_X$, namely $\mathbb{E}_{x\sim P_X}[D(f(x),\hat{f}(x))]$, where $D$ is defined below.

\paragraph{Attacker capabilities.} The attacker holds a standard API subscription and submits arbitrary queries $x \in \mathcal{X}$, each returning a confidence score vector $\boldsymbol{z}=f(x)\in\mathbb{R}^K$ (the logits) over $K$ candidate outputs. The attacker knows the task domain but not $f$'s architecture or training. This interface matches deployed systems, where production LLM APIs expose top-$k$ log-probabilities and specialized fine-tuned classifiers are served through the same score-returning endpoints.

\paragraph{Attack mechanism.} The attacker collects $\mathcal{D}_{\mathrm{steal}} = \{(x_i, \boldsymbol{z}_i)\}_{i=1}^{N}$ and trains $\hat{f}$ by knowledge distillation (KD)~\cite{hinton2015distilling}, minimizing the KD loss
\begin{equation}
  \mathcal{L}_{\mathrm{KD}} = \sum_{i=1}^{N} \operatorname{KL}\!\left(\operatorname{softmax}\!\left(\frac{\boldsymbol{z}_i}{T}\right) \,\Big\|\, \operatorname{softmax}\!\left(\frac{\hat{f}(x_i)}{T}\right)\right),
\end{equation}
where $T > 1$ is the distillation temperature. Figure~\ref{fig:threat-model} depicts this query-and-distill pipeline. The full score vector carries inter-class similarity~\cite{hinton2015distilling}, often called the teacher's dark knowledge, which makes surrogate training far more sample-efficient than learning from hard labels alone~\cite{orekondy2019knockoff}.

\paragraph{Defender capabilities and constraints.} The provider controls the endpoint, sees every query with its metadata (account ID, IP, timing), and has full access to its own confidence scores at inference. It cannot reliably separate extraction traffic from legitimate heavy use, whose query volumes and distributions overlap with distillation campaigns~\cite{juuti2019prada}.

\paragraph{Defense goal.} A good defense degrades a flagged account's outputs by a calibrated amount, scrambling the soft-label signal and, past the mildest settings, flipping a controlled fraction of predicted labels. Benign accounts receive unmodified outputs. Because a mistaken flag now perturbs rather than bans, the provider can run detection more aggressively than any ban policy allows. The difficulty is \emph{calibration}. The provider must set a target degradation and reliably reach it, perturbing neither so hard that outputs look visibly wrong nor so little that the surrogate stays usable.

\subsection{Defense Requirements}
\label{sec:requirements}
A practical perturbation mechanism needs five properties. It must leave the served weights untouched (\emph{parameter-free}) and add little enough latency to stay under a timing side-channel (\emph{minimal overhead}). Its degradation must be \emph{monotonic}, so that surrogate quality can be capped, and \emph{transferable} enough that one policy covers much of a provider's fleet. Finally, \emph{one-shot calibration} makes the mechanism deployable from a single validation pass.

\section{Methodology}
\label{sec:method}

Models differ in logit scale, but once noise is normalized by the logit margin its effect on utility follows a single monotone logistic law that inverts in closed form, fit either per model or once per task. The i.i.d.\ Gaussian primitive and its order-preservation analysis build on Wang et al.~\cite{wang2025aim}.

\subsection{Logits Perturbation}

Logits, the raw scores before softmax, are an effective point of intervention for controlled degradation. \method injects isotropic Gaussian noise into the logits at inference time, which moves utility without touching a single weight. For input logits \(\boldsymbol{z} = (z_1, z_2, \dots, z_K)\), the perturbed logits are:
\begin{equation}
  \boldsymbol{z}' = \boldsymbol{z} + \boldsymbol{\epsilon}, \quad \epsilon_k \overset{\text{i.i.d.}}{\sim} \mathcal{N}(0, \sigma^2),
  \label{eq:logits-noise}
\end{equation}
where \(\sigma^2\) is the noise variance, the core control parameter, and \(\boldsymbol{\epsilon}\) denotes the noise vector with i.i.d.\ components. The final prediction is derived from the perturbed logits via softmax:
\begin{equation}
  \hat{y}' = \operatorname*{arg\,max}_{k} \operatorname{softmax}(\boldsymbol{z}')_k.
\end{equation}

The impact of noise on utility stems from its disruption of logit ordering~\cite{guo2017calibration,platt1999probabilistic,niculescu2005predicting,lakshminarayanan2017simple}. Following the order-preservation analysis of additive Gaussian logit noise~\cite{wang2025aim}, let \(\tau\) be a permutation that sorts the original logits in ascending order, \(z_{\tau(1)} \leq z_{\tau(2)} \leq \dots \leq z_{\tau(K)}\). Define the adjacent logits gap as \(\Delta_i = z_{\tau(i+1)} - z_{\tau(i)}\) for \(i = 1, 2, \dots, K-1\). The probability that the perturbed logits retain the original ordering is approximated by:
\begin{equation}
  \Pr(\text{order preserved}) \approx \prod_{i=1}^{K-1} \Phi\left( \frac{\Delta_i}{\sqrt{2}\sigma} \right),
  \label{eq:order-preservation}
\end{equation}
where \(\Phi(\cdot)\) is the cumulative distribution function (CDF) of the standard normal distribution. Each factor is the exact marginal survival probability of one adjacent pair, and the product treats these events as independent. Because adjacent gaps share a noise component, this is an approximation rather than an identity. As \(\sigma\) increases, \(\frac{\Delta_i}{\sqrt{2}\sigma}\) decreases, so \(\Phi(\cdot)\) and the product decline. This product describes survival of the \emph{entire} ranking. For utility, the operative event is survival of the predicted label.

\begin{proposition}[Exact label preservation and monotone degradation]
\label{prop:monotone}
Fix an input whose clean logits have a unique maximizer \(a=\operatorname*{arg\,max}_{k} z_k\), and let \(g_j=z_a-z_j>0\) for \(j\neq a\). Under i.i.d.\ noise \(\epsilon_k\sim\mathcal{N}(0,\sigma^2)\), the probability that the perturbed prediction matches the clean one is
\begin{equation}
  \pi(\sigma)\;:=\;\Pr\!\Big[\operatorname*{arg\,max}_{k}\,(z_k+\epsilon_k)=a\Big]
  \;=\;\int_{\mathbb{R}}\phi(u)\prod_{j\neq a}\Phi\!\Big(\tfrac{g_j}{\sigma}+u\Big)\,du,
  \label{eq:top1-exact}
\end{equation}
where \(\phi\) and \(\Phi\) are the density and CDF of the standard normal. The map \(\pi\) is continuous and strictly decreasing on \((0,\infty)\), with \(\lim_{\sigma\to0^+}\pi(\sigma)=1\) and \(\lim_{\sigma\to\infty}\pi(\sigma)=1/K\).
\end{proposition}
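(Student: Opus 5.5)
The plan is to condition on the noise at the clean maximizer. Write $\epsilon_a=\sigma u$ with $u\sim\mathcal{N}(0,1)$. The event $\operatorname*{arg\,max}_k(z_k+\epsilon_k)=a$ holds, up to a null set of ties, exactly when $\epsilon_j<g_j+\epsilon_a$ for every $j\neq a$. Given $u$, these are independent events, with probabilities $\Phi(g_j/\sigma+u)$. Integrating over $u$ against $\phi$ then gives the formula \eqref{eq:top1-exact} by Fubini. Ties occur with probability zero because the noise is continuous, so the formula is exact. This is the point where the single-label statement differs from the product approximation \eqref{eq:order-preservation}: there is no independence assumption, only conditioning on a shared variable.

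\emph{Continuity and the limits.} For fixed $u$, the integrand is continuous in $\sigma>0$ and bounded by $\phi(u)$. Dominated convergence therefore gives continuity of $\pi$ on $(0,\infty)$.

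As $\sigma\to0^+$, each $g_j/\sigma\to+\infty$, so the integrand tends to $\phi(u)$ pointwise, and $\pi(\sigma)\to1$.

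As $\sigma\to\infty$, the integrand tends to $\phi(u)\Phi(u)^{K-1}$. Its integral is $\Pr[U_a\ge\max_{j\neq a}U_j]$ for i.i.d.\ standard normals $U_1,\dots,U_K$, which equals $1/K$ by exchangeability.

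\emph{Strict monotonicity.} This is the step I expect to be the main obstacle, though it reduces cleanly. Substitute $s=1/\sigma$ and set $h(s)=\int\phi(u)\prod_{j\neq a}\Phi(sg_j+u)\,du$; it suffices to show $h$ is strictly increasing in $s>0$. Each factor $\Phi(sg_j+u)$ is strictly increasing in $s$ because $g_j>0$, and all factors are positive. Hence, for $s_1<s_2$, the integrand at $s_2$ strictly exceeds the integrand at $s_1$ for every $u$, and integrating against the positive density $\phi$ gives $h(s_2)>h(s_1)$.

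If a derivative version is wanted, differentiate under the integral, justified by the bound $|\partial_s|\le\sum_j g_j\phi(\cdot)\le\sum_j g_j/\sqrt{2\pi}$, to get
\[
h'(s)=\sum_{j\neq a}g_j\int\phi(u)\phi(sg_j+u)\prod_{i\neq a,j}\Phi(sg_i+u)\,du>0 .
\]
Since $\sigma\mapsto1/\sigma$ is strictly decreasing, $\pi$ is strictly decreasing.

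The only real care needed is in the limit $1/K$, where exchangeability of the continuous $U_k$ (no ties) gives each index equal probability of being the maximum. A further consequence follows from the intermediate value theorem: continuity, strict monotonicity and the two limits show that $\pi$ is a bijection from $(0,\infty)$ onto $(1/K,1)$.
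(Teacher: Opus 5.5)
Your proof is correct and follows the paper's argument step for step: the same conditioning on the noise at the top class for the formula, dominated convergence for continuity and both limits, and exchangeability for the $1/K$ limit. The one small difference is that you get strict monotonicity from the pointwise strict growth of the integrand in $s=1/\sigma$, whereas the paper uses the strict nesting $R(\sigma')\subsetneq R(\sigma)$ of survival regions; yours is the same fact conditioned on $\xi_a$, and it spares you from checking that the set difference has positive Gaussian measure.
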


\emph{Proof sketch.} Conditioning on the noise at the top class reduces the event to independent Gaussian comparisons, which gives the integral \eqref{eq:top1-exact}. Since every gap \(g_j>0\), raising \(\sigma\) strictly shrinks the survival region, driving \(\pi\) from \(1\) down to the chance level \(1/K\). The full proof is in Appendix~\ref{app:proofs}.

Averaged over inputs, the population agreement with the clean decision decreases strictly from \(1\) to the chance level \(1/K\). Task accuracy is a distinct quantity. On an input the clean model misclassifies, noise can flip the prediction onto the true label, so accuracy need not fall pointwise in \(\sigma\). It is nevertheless pinned to the agreement curve, with a slack we can bound explicitly.

\begin{corollary}[Accuracy corridor]
\label{cor:corridor}
On an evaluation distribution with clean accuracy \(A_0\), let \(\bar\pi_c(\sigma)\) be the mean of \(\pi(\sigma)\) from Equation~\eqref{eq:top1-exact} over the correctly classified inputs, and for each misclassified input let \(h=z_a-z_y>0\) be the deficit of the true label \(y\) behind the clean prediction \(a\). Then the accuracy under noise, \(A(\sigma)=\operatorname*{avg}_i \Pr[\operatorname*{arg\,max}_k(z_{k,i}+\epsilon_k)=y_i]\), satisfies for every \(\sigma>0\)
\begin{equation}
  A_0\,\bar\pi_c(\sigma)\;\le\;A(\sigma)\;\le\;A_0\,\bar\pi_c(\sigma)+(1-A_0)\,\beta(\sigma),
  \label{eq:corridor}
\end{equation}
where \(\beta(\sigma)=\operatorname*{avg}_{\text{misclassified}}\Phi\!\big(\!-\tfrac{h}{\sqrt{2}\,\sigma}\big)<\tfrac12\), the floor \(A_0\bar\pi_c(\sigma)\) is continuous and strictly decreasing from \(A_0\) to \(A_0/K\), and both envelopes are computable from the same noise-free sweep that fixes \(S\) and \(A_0\). Moreover \(A(\sigma)\to 1/K\) as \(\sigma\to\infty\).
\end{corollary}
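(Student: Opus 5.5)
The plan is to split the evaluation average into correctly and incorrectly classified inputs and bound each part separately with the exact formula of Proposition~\ref{prop:monotone}. Write \(A(\sigma)=A_0\cdot\operatorname*{avg}_{\text{correct}}p_i(\sigma)+(1-A_0)\cdot\operatorname*{avg}_{\text{misclassified}}p_i(\sigma)\), where \(p_i(\sigma)=\Pr[\operatorname*{arg\,max}_k(z_{k,i}+\epsilon_k)=y_i]\). On a correctly classified input, \(y_i=a_i\) is the unique clean maximizer, so \(p_i(\sigma)=\pi_i(\sigma)\) exactly, and the first term is \(A_0\bar\pi_c(\sigma)\). I will assume, as the statement implicitly does, that correctly classified inputs have a unique clean maximizer so that Proposition~\ref{prop:monotone} applies.

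\emph{Envelopes and the floor.} The lower envelope follows by dropping the misclassified term, which is nonnegative. For the upper envelope, on a misclassified input the event \(\{\operatorname*{arg\,max}=y\}\) is contained in \(\{z_y+\epsilon_y>z_a+\epsilon_a\}\). Since \(\epsilon_y-\epsilon_a\sim\mathcal N(0,2\sigma^2)\), this event has probability \(\Phi(-h/(\sqrt2\sigma))\), which is strictly below \(\tfrac12\) because \(h>0\). Averaging gives \(\beta(\sigma)<\tfrac12\). Continuity, strict decrease and the limits of the floor are inherited term by term from Proposition~\ref{prop:monotone}: a finite average of continuous, strictly decreasing functions is continuous and strictly decreasing, with limits \(A_0\cdot1\) and \(A_0\cdot 1/K\). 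For computability, note that \(\bar\pi_c\) and \(\beta\) depend only on the clean logits (through the gaps \(g_j\) and deficits \(h\)) and on \(\sigma\). They are one-dimensional Gaussian integrals or CDF values, so they can be evaluated from the single noise-free pass that also yields \(S\) and \(A_0\).

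\emph{The limit \(A(\sigma)\to1/K\).} This is the one step not directly covered by the proposition, because on misclassified inputs the target class \(y\) is not the maximizer. I would redo the conditioning argument of the proposition for an arbitrary class \(y\): condition on \(\epsilon_y=\sigma u\) to get
\[
p_i(\sigma)=\int_{\mathbb R}\phi(u)\prod_{j\neq y}\Phi\!\Big(\tfrac{z_y-z_j}{\sigma}+u\Big)\,du .
\]
This identity holds with gaps of either sign, and ties have probability zero. As \(\sigma\to\infty\) the integrand converges pointwise to \(\phi(u)\Phi(u)^{K-1}\) and is dominated by \(\phi\), so dominated convergence gives the limit
\[
\int_{\mathbb R}\phi(u)\Phi(u)^{K-1}\,du=\tfrac1K .
\]
This value is the probability that one of \(K\) i.i.d.\ normals is the largest. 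Since every \(p_i(\sigma)\to1/K\), the finite average \(A(\sigma)\) also tends to \(1/K\). The main care needed here is to confirm that the conditioning identity does not use positivity of the gaps, which it does not.
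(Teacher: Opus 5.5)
Your proposal is correct and follows the paper's proof. You use the same split into correctly classified and misclassified inputs, the same pairwise containment in $\{z_y+\epsilon_y>z_a+\epsilon_a\}$ giving $\Phi(-h/(\sqrt2\sigma))<\tfrac12$, and you inherit the floor's properties termwise from Proposition~\ref{prop:monotone}; your re-derivation of the conditioning integral for an arbitrary target class $y$, giving $A(\sigma)\to 1/K$ on misclassified inputs, simply spells out the exchangeability step that the paper only cites.
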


Corollary~\ref{cor:corridor}, proved in Appendix~\ref{app:proofs}, ties utility to agreement. Accuracy stays inside a corridor: the floor inherits the strict monotonicity of Proposition~\ref{prop:monotone}, and the width \((1-A_0)\beta(\sigma)\) is computable in closed form from the clean sweep. Across the seventeen CIFAR architectures and eight MMLU LLMs of Section~\ref{sec:experiments}, that width at a setting retaining $85\%$ of clean accuracy is $2$--$3$ accuracy points on CIFAR-10, $6$--$8$ points on CIFAR-100, and $6$--$14$ points on MMLU, scaling with the misclassified mass \(1-A_0\), and measured accuracy never leaves the corridor at any setting we test. Accuracy is what a provider targets, so the calibration below fits it empirically, the corridor guaranteeing that this fit smooths a bounded band around a monotone mechanism rather than an arbitrary curve. Its endpoints \(A_0\) and \(1/K\) are exactly those that normalize utility in Equation~\eqref{eq:util-norm}, and the \(1/K\) floor reappears as the residual term \(c\) in Equation~\eqref{eq:logistic-decay}. When the controlled quantity is agreement itself, no fitting is needed at all. Equation~\eqref{eq:top1-exact} can be inverted numerically per model, and doing so on the cached sweeps of Section~\ref{sec:experiments} hits agreement targets of $0.9$--$0.6$ with a mean absolute relative error of $0.08\%$ ($0.27\%$ at worst) across those same twenty-five CIFAR and MMLU models, the resolution of the $64$-draw Monte Carlo verification itself. This establishes the \emph{monotonic control} requirement of Section~\ref{sec:requirements} at the mechanism level, and Figure~\ref{fig:performance-curves} shows the same monotonicity end-to-end.

\begin{figure}[!t]
  \centering
  \includegraphics[width=1\textwidth]{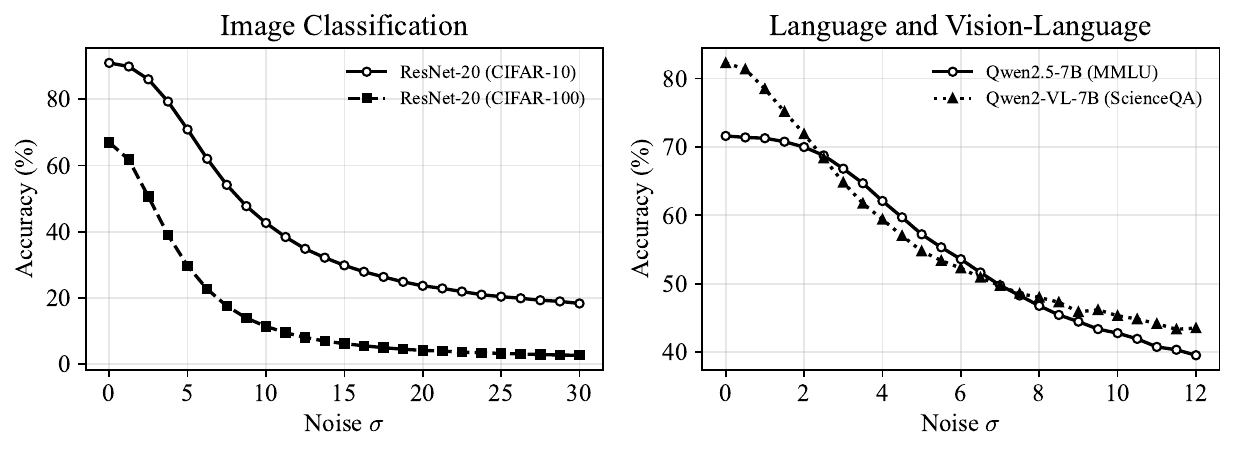}
  \caption{Utility decay under injected logit noise}
  \label{fig:performance-curves}
  \vspace{-0.15cm}
\end{figure}

\subsection{Margin Normalization}

Raw noise variance $\sigma^2$ cannot be compared across models because logit scales differ. ResNet-50 on CIFAR-10 may show logit margins an order of magnitude larger than lightweight MobileNetV2 \cite{sandler2018mobilenetv2}, so the same $\sigma$ causes very different utility drops. We therefore normalize by each model's own logit margin. For each validation sample we take the margin $m_i = z_{(1),i}-z_{(2),i}$, set the model-specific scale factor to its median $S = \operatorname{median}(\{m_i\}_{i=1}^N)$, which is robust to label-ambiguous outliers and ablated against the mean and trimmed mean in Section~\ref{sec:experiments}, and define the normalized noise intensity
\begin{equation}
  v = \frac{\sigma^2}{S^2}.
  \label{eq:normalized-noise}
\end{equation}
A given $v$ then corresponds to comparable logit disruption across models. At $v=1$ the noise variance equals the squared median margin, whatever the architecture. This margin is exactly the smallest gap $\min_{j\neq a} g_j$ from the top logit in Proposition~\ref{prop:monotone}, the gap to the nearest competitor whose corruption first flips the prediction, which is why it sets the relevant noise scale. The margin also delimits the \emph{stealthy} region. While $\sigma$ stays well below it, perturbation scrambles mostly the soft-label tail and usually leaves the predicted label intact, as Equation~\eqref{eq:order-preservation} shows, so $v$ measures how much soft-label corruption a model absorbs before its accuracy starts to fall.

\subsection{Calibration Curve Fitting}
Margin normalization places the noise on a common $v=\sigma^2/S^2$ axis. We likewise normalize utility to each model's operating range,
\begin{equation}
  \tilde{A} = \frac{A - A_{\mathrm{rand}}}{A_0 - A_{\mathrm{rand}}},
  \label{eq:util-norm}
\end{equation}
where $A_0$ is the clean baseline and $A_{\mathrm{rand}}=1/K$ the chance level, so that $\tilde{A}\in[0,1]$ with $\tilde{A}(0)=1$. Across image-classification, LLM, and VLM discriminative tasks the $(v,\tilde{A})$ relationship follows a monotone \emph{logistic} decay. Utility holds near the baseline while the logit margin dominates the noise, drops sharply once the ordering starts to randomize, then settles onto a residual floor:
\begin{equation}
  \tilde{A}(v) = c + \frac{1-c}{1+(v/v_{50})^{p}},
  \label{eq:logistic-decay}
\end{equation}
where $v_{50} > 0$ is the half-degradation scale, $p > 0$ the steepness, and $0 \leq c < 1$ a small residual floor. The logistic form captures the flat head and sharp transition of discrete decision tasks more faithfully than a stretched exponential, which over-predicts degradation at low noise.

\method{} fits $(v_{50},p,c)$ by bounded least-squares on the $(v,\tilde{A})$ points from a single noise-free sweep, in either of two modes. \emph{Per-model} fitting uses the target model's own sweep, the precise deployment mode. \emph{Shared per-task} fitting aggregates a few representative architectures once, after which a new architecture serving that task needs only its margin $S$, with no sweep of its own, trading some precision for zero per-model cost. Measured leave-one-architecture-out, the shared curve stays within a point of the in-sample fit on every task (Section~\ref{sec:experiments}), the residual gap arising because margin normalization equalizes the logit margin but not the full order statistics (Equation~\eqref{eq:order-preservation}), which differ across architectures and class counts $K$. Both modes are monotone, so the inversion below is well-defined.

\subsection{One-Shot Inverse Calculation}
Given a fitted curve, per-model or shared per-task, \method computes in one shot the noise variance that achieves any target utility $A^*$, with no retraining or per-point search. A noise-free pass on a clean validation set fixes the scale factor $S = \operatorname{median}(\{m_i\}_{i=1}^N)$ and baseline $A_0$. We then normalize the target, $\tilde{A}^* = (A^*-A_{\mathrm{rand}})/(A_0-A_{\mathrm{rand}})$, invert Equation~\eqref{eq:logistic-decay}, and de-normalize to the model scale:
\begin{equation}
  v^* = v_{50} \left( \frac{1-\tilde{A}^*}{\tilde{A}^* - c} \right)^{1/p},
  \qquad (\sigma^*)^2 = v^* \, S^2 ,
  \label{eq:inverse-v}
\end{equation}
where $(v_{50},p,c)$ are the fitted parameters. Because $\tilde{A}^*$ decreases as $A^*$ drops, $v^*$ and hence $\sigma^*$ increase monotonically by construction. The inversion is valid for \(c < \tilde{A}^* \leq 1\). For \(A^* \geq A_0\) no noise is needed (\((\sigma^*)^2 = 0\)), and for targets at the chance floor $(\sigma^*)^2$ is set large to saturate degradation.

\subsection{Per-Input Recovery Cost}
\label{sec:cost-guarantee}
Because the injected noise is zero-mean, an attacker can try to cancel it by averaging repeated queries on the same input. The next result bounds what this recovery costs and shows how per-input seeding removes the cheap route.

\begin{proposition}[Minimax query cost of per-input recovery]
\label{prop:cost}
Let the defender operate at normalized intensity \(v=\sigma^2/S^2\) and consider a single input, whose clean logits \(\boldsymbol z\) are unknown to the attacker. The attacker issues \(M\) repeated queries on that input and applies an arbitrary estimator \(\hat{\boldsymbol z}\) to the returned logits. Call the \emph{effective intensity} of a recovery its worst-case normalized error, \(v_{\mathrm{eff}}(\hat{\boldsymbol z})=\sup_{\boldsymbol z}\tfrac{1}{KS^2}\,\mathbb{E}\|\hat{\boldsymbol z}-\boldsymbol z\|^2\).
\begin{enumerate}
  \item[(i)] If the perturbation is drawn freshly i.i.d.\ per query, then every estimator satisfies \(v_{\mathrm{eff}}(\hat{\boldsymbol z})\ge v/M\), with equality attained by the sample mean. Hence to reduce the effective intensity to a usable level \(v_{\mathrm{use}}\in(0,v]\), the attacker must issue \(M\ge v/v_{\mathrm{use}}\) queries per input.
  \item[(ii)] If the perturbation is drawn deterministically from a seed fixed by the input, the \(M\) repeats return identical logits. Their average retains effective intensity \(v\), so exact repetition yields no recovery, and the attacker must instead source \(M\) near-duplicate inputs that the model maps to essentially the same clean logits, each of which draws its own independent perturbation.
\end{enumerate}
\end{proposition}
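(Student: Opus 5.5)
Part (i) is the classical minimax lower bound for estimating a Gaussian location vector from $M$ i.i.d.\ observations $\boldsymbol z'_m=\boldsymbol z+\boldsymbol\epsilon_m$, $\boldsymbol\epsilon_m\sim\mathcal N(0,\sigma^2 I_K)$.

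\emph{Sufficiency reduction.} The sample mean $\bar{\boldsymbol z}=\frac1M\sum_m \boldsymbol z'_m$ is sufficient and has law $\mathcal N(\boldsymbol z,\tfrac{\sigma^2}{M}I_K)$. Since squared loss is convex, Rao--Blackwell lets me restrict to estimators that are functions of $\bar{\boldsymbol z}$ without increasing risk.

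\emph{Achievability.} The sample mean has risk exactly $\mathbb E\|\bar{\boldsymbol z}-\boldsymbol z\|^2=K\sigma^2/M$ for every $\boldsymbol z$. Dividing by $KS^2$ gives $v_{\mathrm{eff}}=v/M$.

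\emph{Lower bound.} I would use the standard Bayesian argument: the minimax risk is at least the Bayes risk under any prior. Place the prior $\boldsymbol z\sim\mathcal N(0,\tau^2 I_K)$. The Bayes estimator is the posterior mean, with Bayes risk $K\,\frac{\tau^2\sigma^2/M}{\tau^2+\sigma^2/M}$. This tends to $K\sigma^2/M$ as $\tau\to\infty$, so $\sup_{\boldsymbol z}\mathbb E\|\hat{\boldsymbol z}-\boldsymbol z\|^2\ge K\sigma^2/M$ for every estimator $\hat{\boldsymbol z}$, including randomized ones and ones using all $M$ raw observations; note that the Bayes bound does not need the sufficiency step. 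Normalizing gives $v_{\mathrm{eff}}\ge v/M$.

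\emph{Query count.} Requiring $v/M\le v_{\mathrm{use}}$ rearranges to $M\ge v/v_{\mathrm{use}}$.

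The one subtle point is whether the attacker can exploit structure in $\boldsymbol z$. The supremum ranges over all of $\mathbb R^K$, so the only constraint is that the prior be supported there, and Gaussian priors are. If the paper intended $\boldsymbol z$ to be restricted, for instance shift-invariant under softmax, I would instead run the prior on the relevant subspace and note the corresponding dimension change. That caveat is where I expect the main obstacle in making the statement precise.

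\emph{Part (ii).} With a seed fixed by the input, the $M$ responses are the same realization $\boldsymbol z+\boldsymbol\epsilon$. Any estimator is therefore a function of a single draw $\boldsymbol z'=\boldsymbol z+\boldsymbol\epsilon$, since the repeats carry no extra information. Applying part (i) with $M=1$ gives $v_{\mathrm{eff}}\ge v$, and plain averaging attains exactly $v$. Repetition thus buys nothing.

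The claim about near-duplicates is the converse observation. Distinct inputs $x_1,\dots,x_M$ get independent seeds, hence independent $\boldsymbol\epsilon_m$. If the clean logits coincide, the setting is exactly that of part (i), so the attacker can reach $v/M$ only by producing $M$ such inputs. If the clean logits merely nearly coincide, the estimator picks up an additional bias term, which can only raise the error. I would state this part as a direct remark rather than a new bound.
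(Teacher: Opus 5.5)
Your proposal is correct and follows the paper's proof essentially step for step: you reduce to the sufficient statistic $\bar{\boldsymbol z}\sim\mathcal N(\boldsymbol z,(\sigma^2/M)I_K)$, lower-bound the worst-case risk by the Bayes risk under a $\mathcal N(0,\tau^2 I_K)$ prior, and let $\tau\to\infty$ to get $K\sigma^2/M$, which the sample mean attains; for (ii) you observe that the seeded repeats coincide. Your handling of (ii), which invokes (i) with $M=1$ so that no estimator (not just the average) beats $v$, slightly sharpens the paper's one-line argument, and like the paper you leave the near-duplicate clause as a remark rather than a proved bound.
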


\emph{Proof sketch.} The averaged responses form a sufficient statistic distributed as \(\mathcal{N}(\boldsymbol z,(\sigma^2/M)I)\), and a limiting Gaussian-prior argument shows no estimator improves on worst-case error \(\sigma^2/M\), which the sample mean attains. The bound is thus minimax over all recovery strategies, not an algebraic property of averaging alone. A per-input deterministic seed instead makes the \(M\) draws identical, so their average yields no variance reduction and recovery must come from distinct inputs. The full proof is in Appendix~\ref{app:proofs}.

Proposition~\ref{prop:cost}(i) turns the deployed intensity into a lower bound \(v/v_{\mathrm{use}}\) on the queries an attacker must spend on a single input to drive that input's residual error down to \(v_{\mathrm{use}}\), holding for every recovery strategy rather than only the sample average. Reading \(v_{\mathrm{use}}\) back through \(\tilde{A}\) as a utility level requires the residual to act like fresh noise of that intensity. That is exact for the sample mean, whose residual is again Gaussian of intensity \(v/M\), and is otherwise a mean-squared-error proxy, since equal-error residuals need not corrupt the soft labels equally.

The scope is narrow. Proposition~\ref{prop:cost} concerns recovery of one input's clean logits from repeated observations of that input, and extraction is not that problem. An attacker fits a surrogate across many distinct inputs, pooling information through the surrogate's inductive bias, which no per-input bound constrains. The \({\sim}M\times\) figure in Section~\ref{sec:extraction} is measured rather than derived, and a lower bound on the query complexity of extraction remains open. What the proposition rules out is the one adaptive strategy that would let an attacker undo the defense for free, and composed with a volume-based detector that caps \(M\) it caps how far per-input recovery can be pushed. Section~\ref{sec:extraction} measures the cross-input route directly and finds that distilling across tens of thousands of distinct perturbed responses attenuates the penalty but does not erase it.

\section{Experiments}
\label{sec:experiments}
We evaluate \method on eight LLMs, one VLM, and twenty-five vision model-dataset combinations spanning eight backbone families and three datasets.
\begin{table}[!t]
  \centering
  \caption{\textbf{LLM and VLM calibration per-model closed-form inversion.}
  Relative error (\%) between target and measured accuracy at targets of
  90/80/70/60\% of each model's clean baseline. LLMs are scored on MMLU, the VLM on ScienceQA.}
  \label{tab:main-nlp-vlm}
  \resizebox{\linewidth}{!}{%
  \begin{tabular}{llccccccc}
    \toprule
    \textbf{Model} & \textbf{Family} & \textbf{Size} & \textbf{Base.} & \multicolumn{4}{c}{\textbf{Rel.\ error at target (\%)}} & \textbf{Mean} \\
     & & & \textbf{(\%)} & \textbf{90\%} & \textbf{80\%} & \textbf{70\%} & \textbf{60\%} & \textbf{$|$err$|$} \\
    \midrule
    Qwen2.5-7B & Qwen & 7B & 71.6 & $+2.8$ & $-0.1$ & $-2.2$ & $-2.1$ & 1.8 \\
    Qwen2.5-14B (4-bit) & Qwen & 14B & 75.9 & $+3.4$ & $+0.6$ & $-2.1$ & $-2.1$ & 2.0 \\
    Llama-3.1-8B & Llama & 8B & 67.1 & $+2.2$ & $+0.9$ & $-0.9$ & $-1.3$ & 1.3 \\
    Mistral-7B & Mistral & 7B & 59.2 & $+2.3$ & $-0.2$ & $-1.3$ & $-0.9$ & 1.2 \\
    Phi-4 (4-bit) & Phi & 14B & 76.0 & $+2.4$ & $+1.4$ & $-0.6$ & $-1.7$ & 1.5 \\
    DeepSeek-R1-Qwen-7B & DeepSeek & 7B & 48.1 & $+1.3$ & $-0.5$ & $-0.9$ & $+0.7$ & 0.8 \\
    DeepSeek-R1-Qwen-14B (4-bit) & DeepSeek & 14B & 71.4 & $+2.6$ & $+0.8$ & $-1.6$ & $-2.1$ & 1.8 \\
    DeepSeek-R1-Llama-8B & DeepSeek & 8B & 51.6 & $+1.6$ & $+1.3$ & $-0.1$ & $+0.2$ & 0.8 \\
    \midrule
    Qwen2-VL-7B & Qwen-VL & 7B & 82.3 & $+0.5$ & $-0.2$ & $-1.0$ & $-0.9$ & 0.6 \\
    \midrule
    \multicolumn{8}{l}{\textbf{Mean $|$relative error$|$ (LLMs / VLM)}} & \textbf{1.4 / 0.6} \\
    \bottomrule
  \end{tabular}}
\end{table}

\subsection{Experimental Setup}
We test eight instruction-tuned LLMs from five families and three size classes on MMLU, three of them DeepSeek-R1 distills~\cite{deepseek2025r1}, and a vision-language model~\cite{wang2024qwen2vl} on ScienceQA. We report multiple-choice accuracy over $2{,}000$ held-out questions, $1{,}500$ for the VLM, taking the score vector to be the model's logits over the candidate answer letters.

The vision suite is our primary calibration testbed because it compares many architectures under matched conditions within each dataset. Its backbones vary in depth and design family, from ResNet~\cite{he2016deep} to VGG~\cite{simonyan2014very}, each measured by top-1 accuracy on the same 2{,}000-sample split of CIFAR-10, CIFAR-100~\cite{krizhevsky2009learning}, and ImageNet-1K~\cite{deng2009imagenet}. Accuracies under noise are averaged over 32--64 realizations. With a fixed ${\sim}2{,}000$-sample evaluation set the finite-sample error is about one point, so sub-percent table entries sit at the measurement floor rather than being exact.

\subsection{Experimental Results}
We first analyze models subjected to simulated extraction pressure. For LLMs and VLMs, \method{} perturbs the confidence scores over candidate outputs, calibrating the noise to a target accuracy. This score interface is the one the threat model assumes, not an artifact of our benchmark. As Table~\ref{tab:main-nlp-vlm} shows, \method{} fits a per-model logistic curve from one noise-free sweep and inverts it in closed form, driving every LLM to within a mean relative error of $1.4\%$ of the target, $3.4\%$ at worst, and the VLM to within $0.6\%$. Under leave-one-model-out evaluation, task-specific shared curves transfer to held-out LLM and vision models, although with higher calibration error than per-model fitting; detailed results are reported in Appendix~\ref{app:margin-ablation}.

The vision suite supplies the architectural breadth to test how well calibration holds. The closed-form inverted $\sigma$ drives measured accuracy onto the target with a mean absolute relative error of $1.1\%$ across all twenty-five vision variants, $5.0\%$ at worst. Measured accuracy decreases as the injected noise grows, tracking the monotone floor of the accuracy corridor (Corollary~\ref{cor:corridor}). Per-model results for the eleven CIFAR-100 and eight ImageNet architectures, including VGG-16 and ViT-B/16, can be found in Table~\ref{tab:vision-permodel} in the Appendix. A single shared curve is looser but still transfers leave-one-architecture-out to an unseen model, the precision-cost trade-off we quantify in the margin-statistic ablation of Section~\ref{sec:ablations}.

\begin{table}[t]
\centering
\caption{Extraction under calibrated output perturbation at 95\%, 85\%, and 75\% retention of clean teacher accuracy. \textit{Base}: student accuracy before distillation; \textit{Clean}: accuracy after distillation from the unperturbed teacher. Vision surrogates are trained from scratch (\textit{Base}: N/A). $\dagger$: surrogate accuracy $\leq$ \textit{Base}.}
\label{tab:extraction}
\footnotesize
\setlength{\tabcolsep}{2pt}%
\begin{tabular}{l ccccc @{\hspace{10pt}} l ccccc}
\toprule
\multicolumn{6}{c}{\textbf{(a) Vision (CIFAR-100)}} & \multicolumn{6}{c}{\textbf{(b) LLM (MMLU, Qwen2.5)}} \\
\cmidrule(lr){1-6}\cmidrule(lr){7-12}
 & \textbf{base} & \textbf{clean} & \textbf{95\%} & \textbf{85\%} & \textbf{75\%} & & \textbf{base} & \textbf{clean} & \textbf{95\%} & \textbf{85\%} & \textbf{75\%} \\
\midrule
\textit{Teacher} & --- & \textit{64.0} & \textit{60.7} & \textit{54.3} & \textit{47.2} & \textit{Teacher} & --- & \textit{72.0} & \textit{68.2} & \textit{61.2} & \textit{53.1} \\
\midrule
RN-56            & --- & 59.7 & 57.8 & 54.2 & \textbf{49.3} & 0.5B & 44.1 & 51.1 & 48.9 & 48.3 & \textbf{46.4} \\
RN-110$^{\star}$ & --- & 61.0 & 59.5 & 55.1 & \textbf{50.2} & 1.5B & 59.0 & 62.1 & 61.7 & 60.0 & \textbf{58.6}\rlap{$^{\dagger}$} \\
VGG-16           & --- & 60.1 & 60.2 & 56.3 & \textbf{51.5} & 3B   & 64.2 & 67.8 & 67.9 & 65.7 & \textbf{62.9}\rlap{$^{\dagger}$} \\
\bottomrule
\end{tabular}
\end{table}

\subsection{Extraction Evaluation}
\label{sec:extraction}
The calibration above controls the exposed accuracy of the defended \emph{teacher}, the target model $f$. We now test whether that degrades a \emph{surrogate} distilled from it. The attacker queries same-distribution data the teacher has never seen and receives possibly perturbed score vectors with no ground-truth labels, distills a surrogate from them, and is scored on a disjoint held-out set, so teacher and surrogate share one yardstick. The $\sigma$ is calibrated on the evaluation distribution exactly as deployed, following Section~\ref{sec:method}. \emph{Vision:} a ResNet-110 teacher trained on $25{,}000$ CIFAR-100 images, clean test accuracy $64.0\%$. The remaining $25{,}000$ training images form the attacker's query pool, with margin statistics indistinguishable from test, median $3.55$ versus $3.53$. Surrogates are ResNet-56, an \emph{exact-match} ResNet-110, and VGG-16, trained from scratch and scored on the official test set. \emph{LLM:} a Qwen2.5-7B teacher on MMLU, held-out accuracy $72.0\%$. The attacker submits $12{,}042$ unseen questions, and pretrained students Qwen2.5-\{0.5B, 1.5B, 3B\} are distilled via LoRA~\cite{hu2022lora} from the returned answer-score vectors and scored on $2{,}000$ held-out questions. Because a pretrained student's absolute accuracy is dominated by its prior, the quantity the defense controls there is the \emph{distillation gain} over the student's own pre-attack accuracy. A rational attacker can always early-stop, so driving the gain to zero is the strongest achievable outcome.

\begin{table}[t]
\centering
\small
\caption{\textbf{Repeated-query averaging attack.} Surrogate top-1 accuracy (\%) after averaging $M$ queries per input, in the ResNet-110 configuration of Table~\ref{tab:extraction}(a) at the 75\% retention target. \emph{Fresh i.i.d.} noise is redrawn per query, whereas \emph{deterministic per-input seeding} reuses one draw, which makes its $M{=}16$ entry a replicate of fresh $M{=}1$. This is a separate training run, so the single-draw values here and in Table~\ref{tab:extraction} ($50.0$, $50.3$, $50.2$) differ by run-to-run variation alone. Dashes indicate settings not evaluated.}
\label{tab:averaging}
\setlength{\tabcolsep}{8pt}
\begin{tabular}{l c c c c c}
\toprule
\textbf{Noise drawn} & $M{=}1$ & $M{=}2$ & $M{=}4$ & $M{=}8$ & $M{=}16$ \\
\midrule
Fresh i.i.d.\ (attack)          & 50.0 & 54.3 & 56.9 & 59.2 & 60.4 \\
Deterministic seeding (defense) & ---  & ---  & ---  & ---  & 50.3 \\
\bottomrule
\end{tabular}
\end{table}

The calibrated perturbation transfers from teacher to surrogate in both modalities (Table~\ref{tab:extraction}). In the vision domain, every surrogate degrades with the throttle, remaining within measurement noise at the 95\% setting, and tracks the throttled teacher. At the 85\% setting, the three surrogates score $54.2$, $55.1$, and $56.3\%$ against the teacher's $54.3\%$, and at 75\% they follow the teacher down to $49.3$, $50.2$, and $51.5\%$ as it falls to $47.2\%$. Cross-sample distillation averages away roughly 40\% of the teacher's accuracy drop at the 75\% setting, but a residual penalty survives in every surrogate, even at a single query per input. Neither the exact-match ResNet-110$^{\star}$ nor the 8 times larger VGG-16 circumvents the defense. At the 75\% setting they fall 10.8 and 8.6 points respectively. On LLMs, the defense similarly regulates the distillation gain. Undefended distillation adds $+7.0$, $+3.1$, and $+3.6$ points over the students' base accuracy, while mild throttling shrinks this improvement. At the 75\% setting the gain of the two stronger students is erased outright, yielding $-0.4$ and $-1.3$ points for the entries marked $^{\dagger}$, so $12{,}042$ queries against the throttled API buy those two students nothing. Given the evaluation sample sizes, the clean-to-75\% drop is about $3\sigma$, so the milder per-step differences should be read as a consistent trend rather than as individually significant.

An attacker who knows the defense can try to cancel the zero-mean noise by averaging $M$ repeated queries per input, paying ${\sim}M\times$ the queries. Table~\ref{tab:averaging} measures the trade-off on a ResNet-110 teacher calibrated to retain $75\%$ of its clean accuracy. Under \emph{fresh} i.i.d.\ noise the surrogate is averaged back toward this run's undefended $60.5\%$, so recovering an essentially clean surrogate costs ${\sim}16\times$ the budget and even a merely usable one ${\sim}4$--$8\times$. Drawing the noise \emph{deterministically} per input removes the free repeats, holding the surrogate at $50.3\%$ even at $M{=}16$. The attacker must then source $M$ distinct near-duplicate queries per example, which is strictly more expensive.

If the endpoint returns only the predicted label, the provider can still inject the noise internally to flip a fraction of labels. Driving a ResNet-110 teacher to ${\sim}47\%$ this way, a standard black-box attacker with one query per image distills only $44.9\%$, against a $60.5\%$ undefended hard-label ceiling. Majority-voting over $M$ repeated queries recovers the flipped labels, but again only at the ${\sim}M\times$ query cost, label accuracy climbing from $47.3\%$ at $M{=}1$ to $63.1\%$ at $M{=}64$ and surrogate accuracy from $44.9\%$ to $56.0\%$. The cost multiplier is thus not specific to the soft-label interface.

\begin{figure}[!t]
  \centering
  \includegraphics[width=\textwidth]{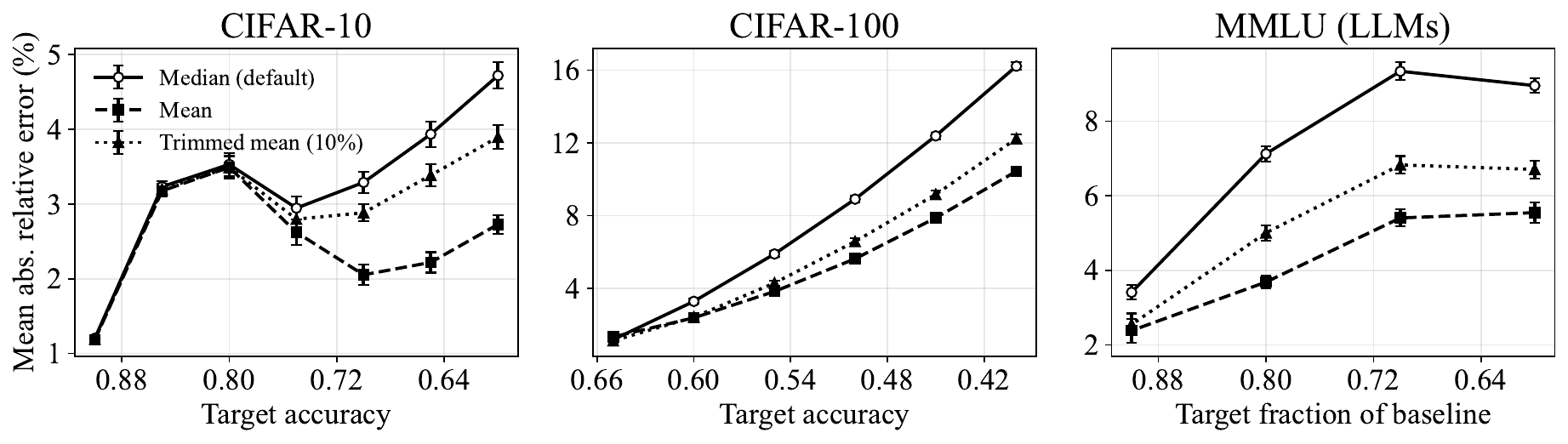}
  \caption{Margin-statistic ablation under \emph{amortized} calibration using one shared logistic curve per task. Error bars denote 95\% confidence intervals over 12 independent resampling of the calibration probe.}
  \label{fig:ablation}
\end{figure}

\subsection{Ablations and Analysis}
\label{sec:ablations}

Within the margin-normalization step, the principal design choice is the statistic used to define $S$. It matters only in the amortized regime, since a per-model fit absorbs any rescaling. On CIFAR-10, CIFAR-100, and MMLU the median, mean, and $10\%$-trimmed mean all hold the shared-curve error in single digits, with the mean tightest at $2.5$, $5.2$, and $4.3\%$ and the default median loosest at $3.3$, $8.0$, and $7.2\%$. As Figure~\ref{fig:ablation} shows, a provider sharing curves at scale recovers up to three points by switching to the mean. The per-model breakdown and held-out transfer appear in Appendix~\ref{app:margin-ablation}.

A per-model fit of Equation~\eqref{eq:logistic-decay} attains $R^2 \approx 0.998$ across the vision architectures, and its closed-form monotone inversion is what yields the $1.1\%$ error reported above. Its flat head and sharp transition match how accuracy decays on discrete decision tasks, which exponential-family alternatives over-smooth (Section~\ref{sec:method}).

\section{Discussion}

\paragraph{Design rationale.} We intervene on logits because a softmax decision depends on logit differences. Adding i.i.d.\ Gaussian noise reshapes that difference distribution without touching any learned parameter, corrupting the soft-label signal a surrogate trains on by a calibrated amount the provider sets, at a cost of $O(K)$ per query from $K$ random draws and an element-wise addition.

\paragraph{Limitations.} \method controls the utility an attacker distills from an API, not orthogonal properties such as safety, bias, or robustness. It works best when the API returns confidence scores, and reaches only the endpoints the provider still tracks~\cite{ren2026residual}. For hard-label APIs, the mechanism can still increase an attacker's query cost, but it necessarily reduces the accuracy observed by legitimate users because no non-top-1 scores can be perturbed without changing the returned label. Margin normalization weakens under highly imbalanced scores, and the operating range must be validated so extreme noise stays well-behaved. Our evaluation of more than thirty model-dataset combinations does not cover every domain, and its breadth sits where the threat is thinnest. The broadest cross-model study uses image classifiers, whereas documented campaigns increasingly target LLM APIs, so the vision results read primarily as evidence of cross-model calibration transfer.

As for the guarantees, Proposition~\ref{prop:monotone} controls agreement with the clean model, so accuracy control holds only up to the corridor of Corollary~\ref{cor:corridor} plus the empirical fit. Proposition~\ref{prop:cost} covers per-input recovery rather than extraction as a whole, for the reasons Section~\ref{sec:cost-guarantee} sets out. Input-seeded noise reaches its limit at the same place. It removes exact repeats, but a determined attacker can still manufacture fresh draws by paraphrasing or near-duplicating inputs and average the noise out.

Our study also fixes one query regime, standard distillation from in-dis\-tri\-bu\-tion inputs at a single temperature, while attacks in the wild draw from out-of-dis\-tri\-bu\-tion or synthetic data and select actively~\cite{orekondy2019knockoff,pal2020activethief,juuti2019prada}. Active, uncertainty-based selection is unlikely to help, since it targets exactly the low-margin points our noise disrupts hardest. At the evaluated 75\% retention setting the lowest-margin quintile has its predicted labels flipped at 1.5--1.7 times the overall rate, 59.8\% versus 39.1\% on MMLU and 72.8\% versus 42.6\% on CIFAR-100. Temperature rescales the perturbed logits uniformly, so the corruption of the soft targets should survive whatever temperature the attacker distills at.

\method{} is therefore a cost-amplification mechanism, not a denial one, and works best alongside query-volume monitoring or rate limits that bound the repeat factor $M$, and post hoc ownership verification.

\paragraph{Ethics.} Throttling acts on detector-flagged accounts, so a misclassified legitimate user may receive degraded output. What that user loses is set by the retention target, up to calibration error and distribution shift. At a $75\%$ target the account keeps three quarters of the clean model's accuracy, and unmodified service resumes once the flag is lifted. The burden still falls unevenly. A client that reads only top-1 labels sees a change only when the prediction flips, whereas one that consumes full score vectors sees perturbed values at much milder settings and has the greater need for an appeal mechanism. Graduated degradation is gentler than a ban, but the provider still has to audit the detector for disparate impact and keep throttling out of safety-critical deployments.

\paragraph{Future work.} The most direct extension is from scored decisions to free-form generation, where perturbing every token may compound the error but sequence-level quality needs its own evaluation. \emph{Adaptive throttling} could raise the noise as a suspected account's cumulative query count grows, with the ramp driven by drift in a tracked signal~\cite{ma2026cliend}. On the theory side, a bound on the surrogate quality reachable at a given degradation level would complement Proposition~\ref{prop:cost}, though a worst-case version depends on convex relaxations of the surrogate's activations~\cite{ma2025convexhull}. Empirically, extraction on further datasets and variance estimates across training seeds remain open.
\section{Related Work}

Model extraction builds on knowledge distillation~\cite{hinton2015distilling}. Tram\`er et al.~\cite{tramer2016stealing} carried this into the black-box setting, fitting a surrogate on the scores a prediction API returns. Later attacks cloned from out-of-distribution images~\cite{orekondy2019knockoff} or synthesized queries~\cite{truong2021datafree}, and Jagielski et al.~\cite{jagielski2020high} showed high-accuracy extraction is cheap even when exact recovery is hard. The threat now reaches language models, from BERT APIs~\cite{krishna2020thieves} to recovering a production LLM's output projection~\cite{carlini2024stealing}, where querying stays far cheaper than training.

Defenses split into reactive and proactive. Detection flags query streams that deviate from benign use~\cite{juuti2019prada,zhang2021seat}, while watermarking and fingerprinting pursue after-the-fact attribution, from trigger sets~\cite{adi2018turning} to dataset inference~\cite{maini2021dataset} and decoding-time LLM watermarks~\cite{kirchenbauer2023watermark} that survive distillation yet are erasable by paraphrasing~\cite{pan2025watermark}. Detection can only decide whether to cut an account off, and attribution arrives once the surrogate has already shipped. Proactively, AlgoSpec obscures released data so only an authorized algorithm can use it~\cite{liu2024transparent}, non-transferable examples recode inputs into a model-specific subspace~\cite{wang2025catchonlyone}, and key-based usage control locks utility in the weights behind an owner's key~\cite{wang2026adaloc}. All three assume the defender owns the inputs or ships the weights, neither of which holds against an attacker who only queries an API.

Output perturbation instead controls what the surrogate learns, serving a suspect plausible-but-corrupted scores. Reverse-sigmoid perturbation~\cite{lee2019defending} introduced the idea. Prediction poisoning~\cite{orekondy2020prediction} and gradient redirection~\cite{mazeika2022steer} steer the surrogate's gradient under a utility budget. Adaptive misinformation~\cite{kariyappa2020adaptive} and diverse ensembles~\cite{kariyappa2021protecting} mislead out-of-distribution queries, while undistillable training reshapes the output so any student degrades~\cite{ma2021undistillable}. Recent LLM variants poison reasoning traces~\cite{savani2025antidistillation}, fine-tune the output layer~\cite{li2025doge}, or bound distillable information~\cite{fang2026distillation}. A parallel line raises the query cost of extraction~\cite{dziedzic2022increasing,dubinski2023bucks}, a goal \method{} shares on the output side.

Yet in all of these output-perturbation defenses the map from perturbation strength to surrogate accuracy is architecture-dependent, so strength must be set per model by hand and does not scale to hundreds of variants under latency constraints. \method{} absorbs that architecture-specific scale into the squared median logit margin, normalizing the noise variance by it and fitting a monotone logistic curve that inverts in closed form. We build on the i.i.d.\ Gaussian logit-noise primitive and order-preservation analysis of Wang et al.~\cite{wang2025aim}, introduced there for training-free utility tiering with per-model tuning, a non-security use. We repurpose it against distillation-based extraction and add the cross-architecture calibration that removes the per-model cost.

\section{Conclusion}
We presented \method, which lets a score-returning API control the utility it exposes to a suspected extraction client. The control rests on one regularity, the monotone logistic law that every architecture we evaluate follows once the noise is normalized by the logit margin. \method inverts it in closed form to hit any target within ${\sim}1\%$ per model, or at single-digit held-out error across architectures. Undoing the zero-mean perturbation on a single input costs several times the query budget, a multiplier we lower-bound in closed form. What a provider previously hand-tuned per model it can now set once, from a target, on any architecture it serves.

\appendix
% Keep hyperref anchors distinct from the body sections (llncs + \appendix
% otherwise reuses section.1, section.2, ... and links land in the wrong place).
\renewcommand{\theHsection}{app.\Alph{section}}

\section{Ablations}
\label{app:margin-ablation}
This section expands the margin-statistic ablation of Figure~\ref{fig:ablation}. Each seed re-estimates $S$ from a fresh one-shot $2{,}000$-sample probe, a bootstrap resample of margins for the probe-sized LLM evaluation set. The task-level ordering reported in Section~\ref{sec:experiments} holds at the per-model level, the mean beating the median on five of six CIFAR-10 models, seven of eleven CIFAR-100 models, and seven of eight LLMs. Seed-to-seed variability is negligible for every statistic (95\% CIs of ${\pm}0.1$--$0.3$ points), so the median's robustness-to-outliers motivation carries no measurable cost at $N{=}2{,}000$. These are in-sample shared fits that isolate the statistic.

\paragraph{Held-out transfer (leave-one-architecture-out).} For each architecture we fit the shared logistic curve on the other architectures of the task and calibrate the held-out one from its margin $S$ alone. Held-out error stays within a point of the in-sample fit on every task, $3.7\%$ versus $3.3\%$ on CIFAR-10 (6 architectures), $8.8\%$ versus $8.0\%$ on CIFAR-100 (11), and $8.2\%$ versus $7.2\%$ on MMLU (8), so the curve generalizes to unseen architectures. Transfer is uneven per architecture. Holding out a ResNet on CIFAR-100 costs ${\sim}12\%$, the reference set then being dominated by other families, while VGG-13 and RepVGG-A0 cost ${\sim}1$--$2\%$. On MMLU the Qwen models are hardest to predict held-out (${\sim}11$--$13\%$) and the Mistral and Phi models the easiest (${\sim}3\%$).

\FloatBarrier

\section{Omitted Proofs}
\label{app:proofs}

\begin{proof}[of Proposition~\ref{prop:monotone}]
Write \(\epsilon_k=\sigma\xi_k\) with \(\xi_k\sim\mathcal{N}(0,1)\) i.i.d. Up to the probability-zero event of ties, \(\{\operatorname*{arg\,max}_k(z_k+\epsilon_k)=a\}=\{\xi_j-\xi_a<g_j/\sigma\ \forall j\neq a\}\). Conditioning on \(\xi_a=u\) and using independence of \(\{\xi_j\}_{j\neq a}\) gives \(\Pr[\,\cdot\mid \xi_a=u]=\prod_{j\neq a}\Phi(g_j/\sigma+u)\); integrating against \(\phi\) yields \eqref{eq:top1-exact}. For monotonicity, the event is \(\{\boldsymbol{\xi}\in R(\sigma)\}\) with \(R(\sigma)=\{\boldsymbol{\xi}:\xi_j-\xi_a<g_j/\sigma\ \forall j\neq a\}\). Since every \(g_j>0\), each threshold \(g_j/\sigma\) is strictly decreasing in \(\sigma\), so \(R(\sigma')\subsetneq R(\sigma)\) for \(\sigma'>\sigma\); the Gaussian law assigns the nonempty difference positive measure, hence \(\pi(\sigma')<\pi(\sigma)\). The integrand of \eqref{eq:top1-exact} is continuous in \(\sigma\) and dominated by \(\phi\), so dominated convergence gives both continuity of \(\pi\) on \((0,\infty)\) and its limits: as \(\sigma\to0^+\) all thresholds diverge and \(\pi\to1\); as \(\sigma\to\infty\) all thresholds vanish and \(\pi\to\Pr[\xi_j<\xi_a\ \forall j\neq a]=1/K\) by exchangeability of i.i.d.\ continuous variables. \qed
\end{proof}

\begin{proof}[of Corollary~\ref{cor:corridor}]
Split the evaluation set into the correctly classified inputs (fraction \(A_0\)) and the misclassified ones (fraction \(1-A_0\)). On a correct input the event \(\{\operatorname*{arg\,max}_k(z_k+\epsilon_k)=y\}\) is exactly the label-preservation event of Proposition~\ref{prop:monotone}, with probability \(\pi(\sigma)\); averaging gives the contribution \(A_0\bar\pi_c(\sigma)\), which is continuous and strictly decreasing from \(A_0\) to \(A_0/K\) termwise by Proposition~\ref{prop:monotone}. On a misclassified input the probability of predicting \(y\) is nonnegative, and \(\{\operatorname*{arg\,max}_k(z_k+\epsilon_k)=y\}\subseteq\{z_y+\epsilon_y>z_a+\epsilon_a\}\); since \(\epsilon_y-\epsilon_a\sim\mathcal{N}(0,2\sigma^2)\) and \(h=z_a-z_y>0\), this event has probability \(\Phi(-h/(\sqrt2\sigma))<1/2\). Averaging over the misclassified inputs yields the upper envelope. Finally, \(A(\sigma)\to1/K\) as \(\sigma\to\infty\) by the exchangeability argument of Proposition~\ref{prop:monotone}. \qed
\end{proof}

\begin{proof}[of Proposition~\ref{prop:cost}]
(i) Write the responses as \(\boldsymbol z^{(m)}=\boldsymbol z+\boldsymbol\epsilon^{(m)}\), \(m=1,\dots,M\). Their sample mean \(\bar{\boldsymbol z}\sim\mathcal{N}(\boldsymbol z,(\sigma^2/M)I_K)\) is a sufficient statistic for \(\boldsymbol z\), so it suffices to bound estimation of a Gaussian mean with per-coordinate variance \(\sigma^2/M\). For any estimator \(\hat{\boldsymbol z}\), the worst-case risk dominates the Bayes risk under the prior \(\boldsymbol z\sim\mathcal{N}(0,\rho^2 I_K)\), which is minimized by the posterior mean at per-coordinate risk \(\rho^2\sigma^2/(M\rho^2+\sigma^2)\). Letting \(\rho\to\infty\) gives \(\sup_{\boldsymbol z}\frac1K\mathbb{E}\|\hat{\boldsymbol z}-\boldsymbol z\|^2\ge\sigma^2/M\), which \(\bar{\boldsymbol z}\) attains, so \(\sigma^2/M\) is the minimax risk. Dividing by \(S^2\), the residual perturbation of any recovered logit vector carries worst-case effective intensity at least \(v/M\); since \(v/M\le v_{\mathrm{use}}\) iff \(M\ge v/v_{\mathrm{use}}\), the bound follows. (ii) With a per-input deterministic seed the \(M\) draws coincide, so their average equals a single draw of intensity \(v\). \qed
\end{proof}

\section{Per-Model Calibration Results}
Table~\ref{tab:vision-permodel} reports per-model calibration for the nineteen CIFAR-100 and ImageNet architectures, listing for each the closed-form inverted noise level $\sigma$ and the accuracy it produces at every utility target. Within each row $\sigma$ increases as the target decreases, matching the strict monotonicity of Proposition~\ref{prop:monotone}. Over these nineteen the measured accuracy lands on the target with a mean relative error of $0.8\%$; the $1.1\%$ quoted in Section~\ref{sec:experiments} pools them with the six CIFAR-10 architectures, whose per-model figures we omit for space.
\begin{table}[!t]
  \centering
  \footnotesize
  \caption{\textbf{Vision per-model calibration.} Each architecture is fit its own logistic curve. Cells report the closed-form inverted $\sigma$ and the measured top-1 accuracy, averaged over 64 noise realizations, per-entry std ${\le}0.5\%$.}
  \label{tab:vision-permodel}
  \setlength{\tabcolsep}{2pt}
  \renewcommand{\arraystretch}{1}
  \begin{tabular}{l c cccccc}
    \toprule
    \multicolumn{2}{c}{\textit{(a) CIFAR-100}} & \multicolumn{6}{c}{$\sigma$\,/\,measured top-1 (\%) at target accuracy (\%)} \\
    \cmidrule(lr){3-8}
    \textbf{Model} & \textbf{Base} & \textbf{65} & \textbf{60} & \textbf{55} & \textbf{50} & \textbf{45} & \textbf{40} \\
    \midrule
    ResNet-20   & 66.7 & 0.70/64.8 & 1.44/59.6 & 1.99/54.7 & 2.50/49.8 & 3.01/45.1 & 3.55/40.2 \\
    ResNet-32   & 68.4 & 1.23/64.7 & 1.95/59.5 & 2.52/54.8 & 3.07/50.0 & 3.61/45.1 & 4.19/40.2 \\
    ResNet-44   & 68.7 & 1.38/64.6 & 2.13/59.7 & 2.74/54.9 & 3.30/50.0 & 3.88/45.3 & 4.48/40.4 \\
    ResNet-56   & 69.3 & 1.59/64.5 & 2.33/59.5 & 2.95/54.8 & 3.54/50.0 & 4.12/45.2 & 4.75/40.3 \\
    VGG-13      & 70.9 & 2.11/64.6 & 2.76/59.9 & 3.31/55.1 & 3.82/50.3 & 4.34/45.3 & 4.87/40.3 \\
    VGG-16      & 70.3 & 1.99/65.8 & 2.59/61.0 & 3.08/55.7 & 3.52/50.2 & 3.96/44.7 & 4.41/39.4 \\
    VGG-19      & 67.0 & 1.33/65.1 & 2.16/60.7 & 2.72/55.6 & 3.20/50.2 & 3.65/44.8 & 4.12/39.7 \\
    MobileNetV2 & 72.9 & 2.14/64.7 & 2.70/59.9 & 3.20/55.1 & 3.68/50.2 & 4.17/45.2 & 4.69/40.2 \\
    RepVGG-A0   & 73.0 & 2.37/64.6 & 2.99/59.9 & 3.55/55.2 & 4.08/50.3 & 4.63/45.3 & 5.21/40.2 \\
    RepVGG-A1   & 74.2 & 2.59/64.7 & 3.17/60.0 & 3.70/55.2 & 4.21/50.2 & 4.73/45.2 & 5.29/40.1 \\
    RepVGG-A2   & 75.2 & 2.61/64.8 & 3.14/60.1 & 3.63/55.2 & 4.11/50.3 & 4.59/45.3 & 5.11/40.1 \\
    \midrule
    \multicolumn{2}{c}{\textit{(b) ImageNet-1K}} & \multicolumn{6}{c}{} \\
    \textbf{Model} & \textbf{Base} & \textbf{70} & \textbf{60} & \textbf{50} & \textbf{40} & \textbf{30} & \textbf{20} \\
    \midrule
    ResNet-18       & 71.8 & 1.11/68.8 & 2.34/59.3 & 3.15/50.5 & 3.92/40.8 & 4.81/30.5 & 6.00/19.8 \\
    ResNet-50       & 80.9 & 2.89/70.0 & 3.66/60.2 & 4.32/50.1 & 5.00/40.1 & 5.80/29.9 & 6.89/19.9 \\
    VGG-16          & 72.2 & 0.80/69.7 & 2.03/59.4 & 2.98/49.9 & 3.98/40.4 & 5.21/30.5 & 6.94/20.1 \\
    MobileNetV3-L   & 75.8 & 1.31/68.5 & 1.85/60.4 & 2.27/51.0 & 2.68/40.7 & 3.15/29.9 & 3.77/19.5 \\
    EfficientNet-B0 & 78.3 & 1.49/69.0 & 1.96/60.7 & 2.35/51.0 & 2.73/40.5 & 3.18/29.9 & 3.78/19.6 \\
    DenseNet-121    & 76.2 & 1.22/68.1 & 1.76/59.9 & 2.18/51.0 & 2.60/40.9 & 3.09/30.3 & 3.75/19.6 \\
    ConvNeXt-T      & 84.5 & 1.90/71.0 & 2.23/60.5 & 2.53/49.9 & 2.83/39.6 & 3.18/29.6 & 3.67/20.1 \\
    ViT-B/16        & 85.7 & 1.88/71.2 & 2.19/60.7 & 2.47/49.9 & 2.76/39.4 & 3.11/29.6 & 3.58/20.0 \\
    \bottomrule
  \end{tabular}
\end{table}

\bibliographystyle{splncs04}
\bibliography{reference}

\end{document}